\documentclass[letterpaper]{article} 
\usepackage[preprint]{aaai2027}
\usepackage[hyphens]{url}  
\usepackage{graphicx} 
\usepackage{natbib}  
\usepackage{caption} 
\usepackage{booktabs}
\usepackage{amsmath}
\usepackage{amssymb}
\usepackage{amsthm}
\newtheorem{proposition}{Proposition}
\newtheorem{lemma}{Lemma}
\newtheorem{theorem}{Theorem}
\title{A Controlled Study of Attention-Only Transformers}
\author{
    Henry Ndubuaku, Karen Mosoyan, Jakub Mroz, Noah Cylich,\\
    Satyajit Kumar, Parkirat Sandhu, Roman Shemet, Justin H Lee
}
\affiliations{
    Cactus Compute, Inc.
}

\begin{document}

\maketitle

\begin{abstract}
Feed-forward networks hold two thirds of a transformer's non-embedding
parameters, yet the architecture has not received a necessity test that
controls parameters, compute, and depth at once. We pretrain attention-only
decoder transformers (Simple Attention Networks, SANs) against standard
transformers matched separately for parameter count, training FLOPs, and
depth (2 to 48 layers across arms), with per-arm learning-rate sweeps, for
up to 105B tokens (1.5 epochs of a 68B-token reasoning-dense corpus) at 6M
to 87M total parameters. Deleting feed-forward layers in place is costly: the
standard transformer leads by 0.47 nats at matched depth, and by 0.26 nats
at matched training FLOPs, where attention spends compute on the
parameter-free quadratic term and so carries fewer parameters at equal
cost. Reallocating the freed budget into attention depth closes the gap:
at matched parameter count the difference is 0.006 nats, 0.27 percent of
loss, reproducible to one part in ten thousand across clean seed pairs,
shrinking across separately trained 5B, 30B, and 105B budgets, and holding
near 0.02 nats across a 29x non-embedding size range at a fixed
31.5B-token budget. Three independent
measurements localize the remaining gap to parametric recall: loss over
token regions, loss over task types, and zero-shot benchmarks.
Attention-only models are better on context-grounded answers and worse
where knowledge must come from weights. Weight spectra show why: routing
matrices (Q/K) crystallize in the first quarter of training while content
matrices accumulate rank through the stable phase, and removing
feed-forward layers relocates this accumulation to the attention output
projection.
QK-normalization, not feed-forward layers or residual gating, keeps
48-layer attention-only stacks trainable. The gap is concentrated, not
diffuse: low-context query prediction carries a per-token deficit five
times the aggregate but only 8 percent of corpus loss, and the
token-weighted decomposition reproduces its own aggregate to within 2
percent; by the largest budget the deficit localizes there entirely on
that sample. A
pre-registered test confirms the account: it predicts a 0.02 to 0.05 nat
gap on knowledge-dense web text, and a matched pair trained on fineweb-edu
measures 0.040. Within the tested regime, attention does the rest.
\end{abstract}

\section{Introduction}

The feed-forward network (FFN) has been a fixed component of the
transformer block since its introduction \citep{vaswani2017attention}, and
in modern decoder configurations it holds roughly two thirds of
non-embedding parameters. A substantial interpretability literature argues
that these layers act as the model's parametric memory: key--value stores
over training facts \citep{geva2021transformer,geva2022transformer}, the
locus of editable factual associations \citep{meng2022locating}, and the
home of identifiable knowledge neurons \citep{dai2022knowledge}. Yet the
converse experiment, removing the FFN entirely and measuring what is
actually lost, has never been run under controls. The closest attempt
folded equivalent capacity back into attention as persistent memory
vectors \citep{sukhbaatar2019augmenting}; probing and editing studies
localize function without deleting it; and the classical theory of pure
attention \citep{dong2021attention} analyzes a setting without residual
connections that no one trains.

The experiment is harder than it looks because removing the FFN perturbs
three quantities at once: parameter count, compute per token, and the
number of nonlinear composition steps. A single-control comparison is
therefore confounded no matter which axis it fixes. We run the necessity
test with three separate matchings, each holding one axis fixed
(Table~\ref{tab:configs}), give every arm its own learning-rate sweep with
a boundary-extension protocol so that no arm competes at another arm's
preferred hyperparameters, and calibrate the experiment's noise floor with
an optimizer-equivalent control pair. All predictions were registered
before the corresponding measurements were made, and one, the gap on
knowledge-dense text, was registered numerically before the run was
launched.

The result is a monotone sequence that decomposes the question. Deleting
feed-forward layers in place, at matched depth, costs 0.47 nats of
validation loss. At matched training FLOPs, where attention's quadratic
term crowds out parameters, the standard transformer still leads by 0.26
nats. But at matched parameter count, with the freed budget reallocated
into attention depth, the gap collapses to a small, highly reproducible
$+0.0055$ and $+0.0054$ nats on clean seed pairs (0.27\% of loss, agreeing
to $10^{-4}$), and it shrinks monotonically across separately trained 5B,
30B, and 105B token budgets while staying near 0.02 nats across a
29$\times$ non-embedding parameter range at a fixed 31.5B-token budget.
The FFN's parameters matter; its functional form largely does not, on
this distribution.

What remains of the gap is not diffuse. Decomposing the loss over token
regions and task types, and separately over zero-shot benchmarks, the
deficit concentrates on low-context prediction: tokens for which the
context offers little to route, so that only parametric knowledge helps.
At the largest budget the attention-only model is ahead on every answer
region of the decomposition sample, including memorization exercises, and
behind only on low-context query tokens; on benchmarks, the one task whose
answer sits in
a provided passage favors the SAN with growing margin, while the one task
demanding out-of-distribution recall favors the FFN model at every budget.
A weight-spectrum analysis supplies the mechanism: in every model, routing
matrices (Q/K) spectrally crystallize within the first quarter of
training, while content matrices accumulate stable rank through the
stable phase; removing the FFN relocates this accumulation to the
attention output projection, the attention-only model's only write path
into the residual stream.

Our contributions: (i) the first controlled necessity test of the
transformer FFN, with parameter, FLOP, and depth matchings, per-arm
learning-rate fairness, and a calibrated noise floor; (ii) a three-way
localization of the residual gap to low-context prediction, confirmed by a
numerically pre-registered out-of-distribution test (predicted 0.02--0.05
nats on fineweb-edu; measured 0.040); (iii) the identification of
QK-normalization, not feed-forward layers or residual gating, as the
component that keeps deep attention-only stacks trainable; and (iv) a
weight-spectrum account in which routing crystallizes early, content
accumulates through the stable phase, and storage relocates when the FFN
is removed.

\section{Related Work}

\textbf{The FFN as memory.} \citet{geva2021transformer} identify FFN
layers as key--value memories; \citet{geva2022transformer} refine the
mechanism; ROME \citep{meng2022locating} and knowledge neurons
\citep{dai2022knowledge} localize editable facts in mid-layer FFNs. These
are probing and editing results; ours is the causal complement, showing by
deletion that the capability lost is precisely parametric storage.
Attention weights store associations too \citep{elhage2021mathematical},
so storage is a tendency with causal evidence rather than an exclusive
location; our decomposition measures the tendency's size instead of
assuming it.
\citet{pires2023one} find FFNs redundant across layers;
\citet{he2024simplifying} remove other block components via
signal-propagation arguments.

\textbf{Attention-only models.} \citet{sukhbaatar2019augmenting} remove
the FFN but restore its capacity as persistent memory vectors, leaving the
necessity question open. The mechanistic-interpretability literature
studies small attention-only transformers as an analyzable model class
\citep{elhage2021mathematical,olsson2022context}, establishing the QK/OV
routing-versus-content decomposition that our spectral dynamics confirm at
realistic depth. \citet{dong2021attention} prove rank collapse for
attention stacks without residuals; our 48-layer results and rank
measurements bound where that analysis applies to trained practice, and
identify QK-normalization \citep{henry2020query,dehghani2023scaling} as
the operative stabilizer.

\textbf{The dual ablation.} MLP-only architectures
\citep{tolstikhin2021mlp,liu2021pay} map the attention-free direction;
MetaFormer \citep{yu2022metaformer} argues the block frame matters more
than the mixer. We supply the missing attention-only side for language
modeling: deleting the channel MLP costs 0.006 nats at matched parameters,
with a specific, predictable residual profile.

\textbf{Components and optimizer.} Our scalar residual gates belong to the
Highway/ReZero/LayerScale family
\citep{srivastava2015highway,bachlechner2020rezero}; sandwich
normalization follows \citet{shleifer2021normformer}; zero-centered norm
gains follow \citet{olmo2024olmo2}; gated attention outputs are studied by
\citet{qiu2025gated}, and depth-wise residual reweighting by
\citet{kimi2026attnres}. We train with Muon
\citep{jordan2024muon,liu2025muon} against an AdamW control.

\textbf{Data regime.} We pretrain on SYNTH \citep{pleias2025synth}, a
reasoning-dense synthetic corpus, deliberately overtrained relative to
compute-optimal \citep{hoffmann2022chinchilla}, in the small-model
synthetic-pretraining lineage of
\citet{eldan2023tinystories,gunasekar2023textbooks}; repetition behavior
extends \citet{muennighoff2023scaling} to synthetic data. Chain-of-thought
expressivity results \citep{merrill2024expressive,feng2023towards}
motivate the trace-formatted regime, with the caveat that those
constructions use MLPs.

\section{Experimental Setup}

\textbf{Architecture.} A SAN block is a pre-norm attention block with the
FFN deleted, exactly:
\begin{align*}
\mathrm{ZCN}(z) &= (1+\gamma)\odot z/\mathrm{RMS}(z), \quad \gamma\
\text{init } 0\\
u &= \mathrm{ZCN}(x)\\
q,k,v &= W_q u,\ W_k u,\ W_v u \quad \text{(GQA: 8Q, 4KV)}\\
q,k &= \mathrm{RoPE}\big(\mathrm{ZCN}_h(q),\ \mathrm{ZCN}_h(k)\big)\\
A &= \mathrm{softmax}\big(qk^{\top}/\sqrt{d_h} + M\big)\\
y &= x + \sigma(g)\cdot W_o(A\,v)
\end{align*}
with $M$ the causal, document-boundary mask over packed 2048-token rows,
$g$ a scalar per sublayer (init 0, so every branch starts at half
strength), tied embeddings, and a final ZCN before the head. The control
arm inserts a SwiGLU FFN, $y' = y + \sigma(g_2)\cdot
W_{\mathrm{down}}\,\mathrm{SwiGLU}(W_{\mathrm{gate}}\tilde u,
W_{\mathrm{up}}\tilde u)$ with $d_{ff}=4d$, after attention; both arms
share one implementation. The complete inventory of per-position nonlinearities in
the SAN is the attention softmax and the normalizations: there is no
learned per-position feature map, which is the entire experimental
manipulation. Four short results make precise what the manipulation
removes.

\begin{proposition}[Conditional linearity]
For any fixed row-stochastic attention pattern $A$, the map from the
normalized context $(\hat u_1,\dots,\hat u_T)$ to the layer's updates is
linear.
\end{proposition}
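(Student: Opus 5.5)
The plan is to write the layer's update explicitly as a composition of maps and check that, once $A$ is held fixed, each factor is linear in the stacked normalized inputs $\hat U=(\hat u_1,\dots,\hat u_T)$. Here $\hat u_t=\mathrm{ZCN}(x_t)$ plays the role of $u$ in the block definition. The update at position $t$ is the residual increment
\[
\Delta_t = y_t - x_t = \sigma(g)\, W_o \textstyle\sum_{s} A_{ts}\, v_s, \qquad v_s = W_v \hat u_s .
\]
The query and key path, including $\mathrm{ZCN}_h$ and RoPE, enters only through $A$. Since the statement fixes $A$ as an arbitrary row-stochastic matrix (the mask $M$ only zeroes entries and does not change this), that whole nonlinear path is simply removed from the dependence. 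It is worth stating explicitly that fixing $A$ severs its dependence on $\hat U$. That single observation carries the proposition.

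The key steps, in order, are as follows.

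First, the value map $\hat U\mapsto V=\hat U W_v^{\top}$ is linear. With grouped-query attention, each KV head is a row-block of $W_v$ shared across query heads, and that is still a linear map.

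Second, the mixing $V\mapsto AV$ is linear, because it is left-multiplication by a fixed matrix, applied per head. With GQA, each query head $h$ has its own fixed $A^{(h)}$ acting on the value of its assigned KV group. The combined map is $\bigoplus_h A^{(h)}\otimes$ (the appropriate projection), which is linear.

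Third, the output projection $W_o$ and the scalar gate $\sigma(g)$ are fixed parameters, so the map $Z\mapsto \sigma(g)\,Z W_o^{\top}$ is linear. A composition of linear maps is linear, which gives
\[
\mathrm{vec}(\Delta) = \sigma(g)\,(I_T\otimes W_o)\Big(\textstyle\bigoplus_h A^{(h)}\otimes E_h\Big)(I_T\otimes W_v)\,\mathrm{vec}(\hat U)
\]
for suitable head-selection matrices $E_h$.

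The main obstacle is not technical but a matter of delimiting the claim carefully. The statement concerns the map from the normalized context, not from the raw residual stream $x$; ZCN itself is nonlinear through the division by $\mathrm{RMS}$. I would also remark that the map is linear rather than merely affine, since there are no biases. Finally, I would note that row-stochasticity is not actually used for linearity; it only records that $A$ is a softmax output. The proof should therefore be a short paragraph with the displayed factorization.
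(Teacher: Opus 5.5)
Your proposal is correct and follows the paper's proof: with $A$ fixed, the update is the composition of $W_v$, left-multiplication by $A$, and $W_o$, scaled by the constant $\sigma(g)$, and your GQA bookkeeping and vectorized factorization only spell this out. One notational fix: in the displayed formula the head-mixing operator should be the sum $\sum_h A^{(h)}\otimes E_h$ rather than a direct sum, because every query head reads from the same stacked value vector, and as written the factors do not compose.
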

\begin{proof}
It is the composition of the linear maps $W_v$, $v \mapsto Av$, and
$W_o$, scaled by the constant $\sigma(g)$.
\end{proof}

\begin{proposition}[Simplex transport]
Per head, the pre-projection update at position $i$ lies in the convex
hull of $\{W_v \hat u_j : j \le i\}$.
\end{proposition}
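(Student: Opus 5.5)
The plan is to unfold the definition of the pre-projection update for a single head and read off the convex combination directly. Fix a head $h$ and a position $i$. By the block definition, the pre-projection update at position $i$ is the $i$-th row of $A v$, namely $\sum_{j} A_{ij} v_j$, where $v_j = W_v \hat u_j$ is the value vector for that head, with $\hat u_j = \mathrm{ZCN}(x_j)$. Here $W_v$ means the head's slice of the value projection; under GQA this is the slice of the KV group the query head is assigned to, which does not change the argument.

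The substantive step is to verify that the coefficients $(A_{ij})_j$ form a probability vector supported on $\{j \le i\}$. The attention row is $\mathrm{softmax}$ applied to $q_i k_j^{\top}/\sqrt{d_h} + M_{ij}$. I will use the following facts about the mask and the softmax.

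\textbf{Masked entries vanish.} The mask $M$ is causal and document-bounded, so $M_{ij} = -\infty$ for $j > i$, and also for $j$ in an earlier document. Hence $A_{ij} = \exp(-\infty)/Z = 0$ for these entries.

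\textbf{Unmasked entries are positive.} For the unmasked entries $M_{ij}=0$, so $A_{ij}>0$.

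\textbf{The row sums to one.} The row normalizer is finite and positive, because the diagonal entry $j=i$ is always unmasked. A token attends to itself and lies in its own document. This is the one point that needs care, since a fully masked row would make the softmax undefined.

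Combining these facts, $A_{ij}\ge 0$ and $\sum_{j\le i} A_{ij} = 1$. So the update $\sum_{j\le i} A_{ij} W_v \hat u_j$ is a convex combination of $\{W_v \hat u_j : j\le i\}$ and lies in their convex hull.

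The document mask only shrinks the support further, to the same-document prefix. It therefore gives the slightly stronger statement that the update lies in the hull of a subset, which is contained in the stated hull. The QK normalization and RoPE affect only the values of $A_{ij}$, not its row-stochasticity, so they need no separate treatment.

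I expect no real obstacle beyond the nonempty-row observation above. The remaining content is the standard fact that softmax outputs lie on the probability simplex, combined with the conditional linearity already noted in the Conditional linearity proposition. That proposition identifies $v\mapsto Av$ as the only mixing step, and here it is a simplex-weighted average.
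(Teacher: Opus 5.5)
Your proof is correct and follows the same route as the paper, whose entire argument is that softmax rows are nonnegative and sum to one. Your check that the diagonal entry is always unmasked (so the masked softmax is well-defined) and your handling of the GQA value slice are careful details that the paper leaves implicit.
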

\begin{proof}
Softmax rows are nonnegative and sum to one.
\end{proof}

\noindent A SAN layer selects and transports content present in context;
it cannot synthesize representations unsupported by it. This is the formal
sense of ``context-grounded'' used throughout.

\begin{lemma}[Scalar bottleneck]
At sequence length 1, the entire depth-$L$ network is the linear map
$x_L = \big[\prod_\ell (I + \tilde B_\ell/\rho_\ell)\big]x_0$, where
$\tilde B_\ell = \sigma(g_\ell) W_o W_v\,\mathrm{diag}(1+\gamma_\ell)$ is
constant and $\rho_\ell = \mathrm{RMS}(x_\ell)$: a linear map modulated by
exactly $L$ scalars.
\end{lemma}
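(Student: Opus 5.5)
At sequence length $T=1$ we simply trace one layer through the SAN block equations and then compose layers by induction on depth. With a single position the causal mask leaves exactly one admissible key. The softmax row is therefore the scalar $1$, and $A=[1]$ no matter what $q$ and $k$ are. This is the key observation: the whole query/key path, including $\mathrm{ZCN}_h$ on $q,k$ and RoPE, drops out of the computation. It does not matter that RoPE at position $0$ is the identity, since the attention weight is already forced.

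The attention output is then just $v = W_v u$, and the layer update reads
\begin{equation*}
x_{\ell+1} = x_\ell + \sigma(g_\ell)\, W_o W_v\, \mathrm{ZCN}(x_\ell).
\end{equation*}
Next we expand $\mathrm{ZCN}(x_\ell) = (1+\gamma_\ell)\odot x_\ell/\mathrm{RMS}(x_\ell) = \mathrm{diag}(1+\gamma_\ell)\,x_\ell/\rho_\ell$. This gives
\begin{equation*}
x_{\ell+1} = \big(I + \tilde B_\ell/\rho_\ell\big)x_\ell, \qquad \tilde B_\ell = \sigma(g_\ell) W_o W_v\,\mathrm{diag}(1+\gamma_\ell).
\end{equation*}
Here $\tilde B_\ell$ is built only from learned parameters, so it is constant, i.e.\ independent of the input. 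The only input-dependent quantity is the scalar $\rho_\ell$. This is the same reasoning as the Conditional linearity proposition, specialized to a pattern that is forced rather than fixed by assumption.

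Iterating from $\ell=0$ to $L-1$ gives the ordered product, with later layers acting on the left:
\begin{equation*}
x_L = \Big[\prod_{\ell} \big(I+\tilde B_\ell/\rho_\ell\big)\Big] x_0 .
\end{equation*}
Each $\rho_\ell = \mathrm{RMS}(x_\ell)$ depends on $x_0$, but only through one real number per layer. So the map is linear once the $L$ scalars $\rho_0,\dots,\rho_{L-1}$ are fixed, which is the claimed sense of ``modulated by exactly $L$ scalars.''

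Two bookkeeping points need care, and neither is a real obstacle. First, $\rho_\ell$ must be nonzero. It is nonzero whenever $x_\ell\neq 0$; alternatively, the RMS carries an $\epsilon$ in practice, which changes $\rho_\ell$ but not the structure of the argument. Second, the statement is about the residual stream $x_L$, before the final ZCN and the head. Those two steps add one more scalar normalization and a fixed linear readout, so the same form persists if one extends the claim to the logits.

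The only substantive step is the collapse $A=[1]$. It is what makes the Q/K parameters irrelevant at $T=1$, and it is where the single-position assumption is actually used.
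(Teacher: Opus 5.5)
Your proposal is correct and follows the paper's own proof: the single-entry softmax forces $A=[1]$, each block becomes $x_{\ell+1}=(I+\tilde B_\ell/\rho_\ell)x_\ell$, and induction over layers gives the ordered product. Your remark that RoPE being the identity at position $0$ is not needed is a valid minor sharpening, since the softmax equals $1$ whatever $q$ and $k$ are; the fixed GQA head-replication map between $W_v$ and $W_o$ is simply absorbed into the constant $\tilde B_\ell$, which the paper also leaves implicit.
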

\begin{proof}
$A = [1]$ and RoPE at position 0 is the identity, so each block computes
$x_{\ell+1} = (I + \tilde B_\ell/\rho_\ell)\,x_\ell$; induct over layers.
\end{proof}

\noindent All per-position nonlinearity flows through this $L$-scalar
bottleneck; cross-token attention is the only escape, and pricing that
restriction is what the experiments do.

\begin{theorem}[Bounded stream variance at init]
Under the implemented initialization (exact RMS normalization, gates
$\sigma(0)=\tfrac12$, output projections i.i.d.\ mean-zero with variance
$s^2/(2N)$), $\mathbb{E}\|x_N\|^2 = \mathbb{E}\|x_0\|^2 + \Theta(1)$
uniformly in depth $N$.
\end{theorem}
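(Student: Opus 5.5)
Write $x_{\ell+1} = x_\ell + \tfrac12 W_o^{(\ell)} h_\ell$, where $h_\ell = A_\ell v_\ell$ is the pre-projection attention output of block $\ell$. Here $N$ counts sublayers, so the FFN arm fits the same template with $W_{\mathrm{down}}$ in the role of $W_o$. The plan is a martingale-type energy recursion. The key structural fact is that $W_o^{(\ell)}$ is drawn independently of everything computed up to and including $h_\ell$, since $h_\ell$ depends only on $x_\ell$ and on $W_q,W_k,W_v$ of block $\ell$. We therefore condition on the $\sigma$-algebra $\mathcal F_\ell$ generated by $x_0$, all earlier blocks, and the input-side weights of block $\ell$.

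\textbf{Step 1 (orthogonality).} Because $\mathbb{E}[W_o^{(\ell)}]=0$ and $W_o^{(\ell)}$ is independent of $\mathcal F_\ell$,
\[
\mathbb{E}\big[\langle x_\ell, W_o^{(\ell)} h_\ell\rangle \,\big|\, \mathcal F_\ell\big]=0 .
\]
The cross term therefore vanishes. Writing $d$ for the output dimension and $m$ for the input dimension of $W_o$,
\[
\mathbb{E}\|x_{\ell+1}\|^2=\mathbb{E}\|x_\ell\|^2+\tfrac14\,\mathbb{E}\|W_o^{(\ell)}h_\ell\|^2
=\mathbb{E}\|x_\ell\|^2+\tfrac14\cdot\tfrac{s^2}{2N}\, d\;\mathbb{E}\|h_\ell\|^2 ,
\]
using $\mathbb{E}[W^\top W]=d\,\tfrac{s^2}{2N} I_m$ for i.i.d.\ entries.

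\textbf{Step 2 (normalization caps the branch).} Exact RMS normalization with $\gamma=0$ at init gives $\|\hat u_j\|^2=d$ for every position $j$, deterministically and regardless of the stream's size. By Proposition (Simplex transport), $h_\ell$ is a convex combination of the vectors $W_v\hat u_j$. Jensen's inequality then gives
\[
\|h_\ell\|^2\le \max_j \|W_v\hat u_j\|^2 .
\]
To take expectations, I bound this more robustly by $\sum_j \|W_v\hat u_j\|^2$, or, better, by $\|W_v\|_{\mathrm{op}}^2\, d$. The operator norm of $W_v$ is independent of the $\hat u_j$ at the first-moment level, and its expectation is $O(1)$ under the standard fan-in initialization. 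This yields $\mathbb{E}\|h_\ell\|^2\le C$ with $C$ independent of $\ell$ and $N$. For the FFN sublayer, the same argument applies with the SwiGLU image of a norm-$\sqrt d$ vector, whose second moment is likewise bounded by a constant.

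A matching lower bound uses the case where $A$ is one-hot at position $0$, or more generally that $\|h_\ell\|^2$ is bounded below in expectation by a positive constant $c$. For this I would use Lemma (Scalar bottleneck)'s view of the first position, where $h_\ell=W_v\hat u_0$ exactly, so $\mathbb{E}\|h_\ell\|^2=\mathrm{tr}\,\mathbb{E}[W_v^\top W_v]$ is a fixed constant. Restricting the stated norm to a single position (or summing over positions with a fixed $T$), we get $c\le \mathbb{E}\|h_\ell\|^2\le C$.

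\textbf{Step 3 (telescoping).} Summing the $N$ increments, each between $\tfrac{s^2 d}{8N}c$ and $\tfrac{s^2 d}{8N}C$, gives
\[
\mathbb{E}\|x_N\|^2-\mathbb{E}\|x_0\|^2\in\big[\tfrac{s^2 d c}{8},\,\tfrac{s^2 d C}{8}\big],
\]
which is $\Theta(1)$ uniformly in $N$. The $1/N$ in the variance exactly cancels the $N$ terms.

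\textbf{Main obstacle.} The delicate step is Step 2. We need a uniform bound on $\mathbb{E}\|h_\ell\|^2$ even though the attention pattern $A_\ell$ depends on $W_q,W_k$ and on the data in complicated ways. The resolution I would pursue is that the bound must never look at $A_\ell$ beyond row-stochasticity: the convex-hull property plus deterministic normalization of $\hat u$ decouple everything, leaving only moments of $W_v$, which are fixed by initialization. One must also be careful that $W_v$ and the $\hat u_j$ are not independent once $\ell>0$. Bounding by the operator norm avoids needing independence, at the cost of a $\log d$-free but dimension-dependent constant. Since the claim is uniform in $N$ only, that constant is acceptable.
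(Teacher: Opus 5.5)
Your proposal is correct and is essentially the paper's argument: cross terms vanish because $W_o^{(\ell)}$ is mean-zero and independent of everything upstream, RMS normalization plus simplex transport caps the branch's second moment independently of the stream, and the $1/(2N)$ variance telescopes to $\Theta(1)$; your position-0 computation also supplies the lower half of the $\Theta$, which the paper's proof only asserts. One correction: at initialization $W_v^{(\ell)}$ \emph{is} independent of the $\hat u_j$, because these depend only on $x_0$ and earlier blocks, and your lower-bound identity $\mathbb{E}\|W_v\hat u_0\|^2=\mathrm{tr}\,\mathbb{E}[W_v^\top W_v]$ relies on exactly that independence, so you should drop the caveat claiming otherwise.
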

\begin{proof}
Cross terms $\mathbb{E}\langle x_\ell, b_\ell\rangle$ vanish because
$W_o$ is mean-zero and independent of everything upstream, so second
moments add. Each branch input is RMS-normalized, so its second moment is
bounded independent of the stream magnitude; row-stochastic $A$ cannot
increase the maximum value norm (Proposition 2); the $1/(2N)$ output
variance then gives $\mathbb{E}\|b_\ell\|^2 = \Theta(1/N)$, and the sum
over $N$ layers telescopes to $\Theta(1)$.
\end{proof}

\noindent Each hypothesis is load-bearing in the proof; empirically,
however, the gate factor proved dispensable (Table~\ref{tab:ablations}),
so the initialization scaling and normalization carry the result in
practice.

\textbf{Matchings.} Table~\ref{tab:configs} gives the three controls.
Removing the FFN from a 20-layer model (iso-depth) deletes 72\% of its
parameters; matching parameters instead (iso-param) reallocates the budget
into depth, 20 attention-only layers against 4 standard blocks; matching
training FLOPs (iso-FLOP) sits between, because attention pays a
parameter-free quadratic cost in sequence length. No single matching is
the fair one; we report all three.

\begin{table}[t]\centering\footnotesize
\setlength{\tabcolsep}{3.5pt}
\begin{tabular}{@{}lllll@{}}
\toprule
Arm & Config & Total & Non-emb. & GF/tok \\
\midrule
SAN (ours) & 20L $d$512, no FFN & 24.13M & 15.74M & $\sim$0.40 \\
FFN iso-param & 4L $d$512, $f$2048 & 24.12M & 15.73M & $\sim$0.20 \\
FFN iso-FLOP & 9L $d$512, $f$2048 & 43M & 35M & $\sim$0.39 \\
FFN iso-depth & 20L $d$512, $f$2048 & 87.06M & 78.67M & $\sim$0.72 \\
\bottomrule
\end{tabular}
\caption{The three matchings ($f$ = FFN width; GF/tok = training GFLOPs
per token). Removing the FFN perturbs parameters, compute, and depth at
once; each control fixes one axis. Iso-param match is exact to $<$0.04\%.}
\label{tab:configs}
\end{table}

\textbf{Fairness protocol.} Every arm $\times$ optimizer cell received its
own learning-rate sweep at 5B tokens (0.5$\times$/1$\times$/2$\times$
around a base point, with a boundary rule extending any sweep-edge winner
until its curve turned; 17 cells total, Table~\ref{tab:sweep}). The
locked rates are interior minima for all four arms. Global batch (512
rows, 1.05M tokens/step), schedule shape (warmup-stable-decay), data
order, validation set, and all masking are identical across arms. Where a
training phase ran on 7-GPU nodes, batch and learning rates were
frame-matched so effective values equal the 8-GPU locks (511 vs.\ 512
rows, 0.2\%).

\begin{table}[tb]\centering\footnotesize
\setlength{\tabcolsep}{3pt}
\begin{tabular}{@{}lccccccl@{}}
\toprule
Arm & 0.5$\times$ & 1$\times$ & 2$\times$ & 4$\times$ & 8$\times$ &
16$\times$ & Lock \\
\midrule
SAN Muon & 2.248 & \textbf{2.245} & 2.251 & & & & 0.02 \\
FFN Muon & 2.223 & 2.224 & \textbf{2.216} & 2.237 & & & 0.04 \\
SAN AdamW & 2.361 & 2.293 & \textbf{2.267} & 2.321 & & & 6e-4 \\
FFN AdamW & 2.305 & 2.254 & 2.230 & 2.219 & \textbf{2.199} & 2.216 &
2.4e-3 \\
\bottomrule
\end{tabular}
\caption{The learning-rate fairness sweep (val loss at 5B tokens;
1$\times$ is Muon 0.02, AdamW 3e-4). Every lock is an interior minimum.
At the most extreme rate probed, the FFN model's FFN gates collapsed to
mean 0.07 while its attention gates stayed structured: under rate stress
the standard transformer self-pruned toward attention-only form.}
\label{tab:sweep}
\end{table}

\textbf{Noise floor.} Our zero-centered RMSNorm and standard RMSNorm are
optimizer-equivalent under kernel-only weight decay (the gain
parameterizations differ by a constant shift, and Adam-family updates are
shift-invariant). Training both anyway calibrates the same-seed noise
floor: $\Delta = 0.0015$ nats. Every noise comparison below is against
this floor or the three-seed spread. The pipeline's determinism is
strong: a mid-training restore of one run from its 40\% checkpoint, with
cold optimizer state, reproduced the original final validation loss
exactly (1.5982).

\textbf{Evaluation, and two distinct pools.} Headline validation loss
(Table~\ref{tab:main}) is computed on a fixed held-out document pool,
disjoint from training. The region and exercise decomposition
(Table~\ref{tab:regions}) requires exercise metadata that exists only in
the raw corpus, so it uses a second, deterministic 20k-document sample of
the corpus head: approximately 99.9\% of these documents were seen in
training by both arms (about 1.5 times), and documents longer than the
context window are dropped by whole-document packing, a mild selection
toward shorter documents. Decomposition
comparisons are therefore paired contrasts on identical, equally exposed
tokens, valid for localizing the gap but not held-out measurements, and
the two pools' aggregates are not interchangeable (we reconcile them
below). Zero-shot benchmarks use lm-evaluation-harness
\citep{gao2023lmeval}; weight spectra are computed over milestone
checkpoints. Code, training curves, evaluation reports, and all
checkpoints, including quarter-milestones, accompany this work.

\section{The Cost of Removing Feed-Forward Layers}

\begin{table}[t]\centering\footnotesize
\setlength{\tabcolsep}{3.5pt}
\begin{tabular}{@{}llll@{}}
\toprule
Model & Params & Val loss @105B & $\Delta$ vs SAN \\
\midrule
SAN (3 seeds) & 24.13M & $2.0685 \pm 0.0028$ & --- \\
FFN iso-param (3s) & 24.12M & $2.0812 \pm 0.0230$\,$^\dagger$ & $-0.0055$\,$^\ddagger$ \\
FFN iso-FLOP & 43M & 1.8059 & $-0.263$ \\
FFN iso-depth & 87.06M & 1.5982 & $-0.470$ \\
\midrule
\multicolumn{4}{@{}l@{}}{fineweb-edu @31.5B: SAN 3.0131, FFN 2.9733, $\Delta{=}0.0398$} \\
\bottomrule
\end{tabular}
\caption{Deleting FFNs in place (iso-depth) costs 0.47 nats; at matched
FLOPs 0.26; reallocating the budget into attention depth leaves a small,
seed-reproducible 0.006 nats. $^\dagger$Includes one run with a documented terminal
instability (see Discussion); excluding it, $2.0650 \pm 0.0006$.
$^\ddagger$Clean-pair paired $\Delta$: $+0.0055$ (s42), $+0.0054$ (s44);
incident pair $-0.0491$. Same-seed noise floor: 0.0015.}
\label{tab:main}
\end{table}

Table~\ref{tab:main} and Figure~\ref{fig:curves} give the main result as a
monotone sequence. Deleting FFNs in place (iso-depth control, 87M
$\rightarrow$ 24M parameters) costs 0.470 nats. At matched compute
(iso-FLOP), where the FFN arm carries 1.8$\times$ the parameters, it leads
by 0.263. At matched parameters the gap is $+0.0055$ and $+0.0054$ nats on
the two clean seed pairs, agreeing to $10^{-4}$; the third pair contains a
documented terminal-phase instability in the FFN run (gradient-norm ramp
at the learning-rate floor; Table~\ref{tab:main}) and reverses the sign. The
sequence orders exactly by how much parameter budget the control lets
attention reclaim.

\begin{figure}[t]
\centering
\includegraphics[width=\columnwidth]{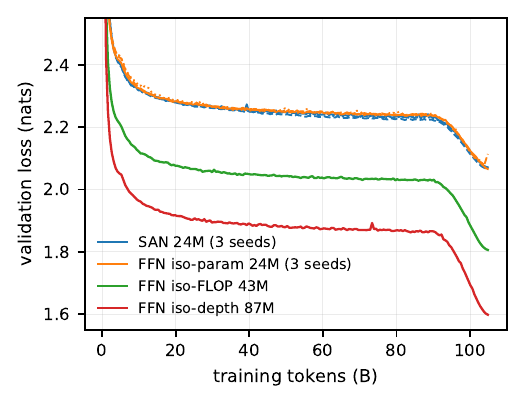}
\caption{Validation loss over 105B tokens: three SAN seeds, three FFN
iso-param seeds, and the iso-FLOP and iso-depth controls. The iso-param
pair is visually indistinguishable through training.}
\label{fig:curves}
\end{figure}

\textbf{The token axis.} The iso-param gap falls from 0.046 nats at 5B
tokens to 0.019 at 30B to 0.0055 at 105B, each budget separately trained
and separately tuned (different schedule horizons), so this is not one
curve read at three points. More training closes the gap.

\textbf{The parameter axis.} At a fixed 31.5B-token budget across five
matched size pairs (Figure~\ref{fig:ladder}), the gap is $-0.045$ nats at
the smallest size (the 2-layer FFN partner hits a depth floor first),
crosses zero, and plateaus at $\approx$0.02 nats from 16M to 57M
non-embedding parameters. The gap does not grow with scale in the tested
range.

\begin{figure}[t]
\centering
\includegraphics[width=\columnwidth]{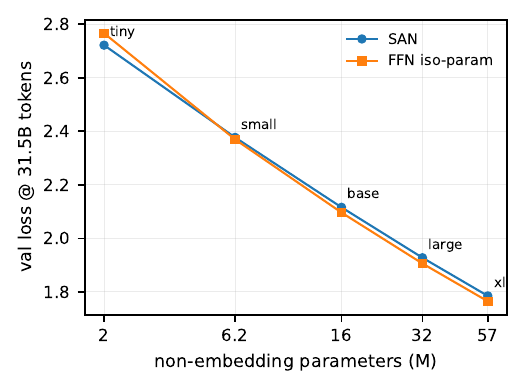}
\caption{Iso-token scaling (31.5B tokens per size). The SAN wins at the
smallest size; from 16M non-embedding parameters upward the gap is flat at
$\approx$0.02 nats.}
\label{fig:ladder}
\end{figure}

\textbf{Repetition.} Constraining both arms to 2M, 8M, or 32M unique
documents at fixed tokens (up to 18 epochs) costs at most 0.010 nats and
shows no architecture $\times$ repetition interaction
(Figure~\ref{fig:repetition}), extending data-constrained scaling results
\citep{muennighoff2023scaling} to synthetic data.

\begin{figure}[tb]
\centering
\includegraphics[width=0.92\columnwidth]{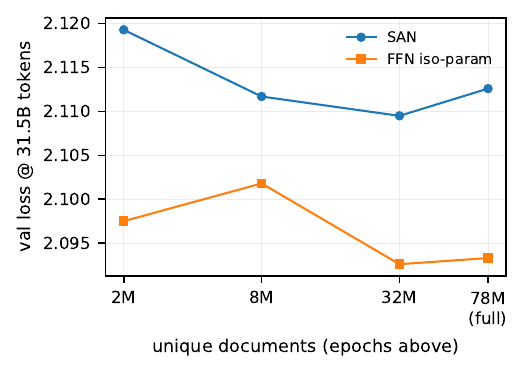}
\caption{Unique-document constraints at a fixed 31.5B-token budget. Both
arms are nearly repetition-insensitive, with no architecture
interaction.}
\label{fig:repetition}
\end{figure}

\section{Where the Gap Lives}

\begin{table}[tb]\centering\footnotesize
\setlength{\tabcolsep}{4pt}
\begin{tabular}{@{}lcccc@{}}
\toprule
& query & trace & answer & all \\
\midrule
token share (\%) & 5.8 & 57.2 & 35.9 & 100 \\
loss share (\%) & 7.9 & 55.4 & 36.6 & 100 \\
\midrule
$\Delta$ @31B & $+.052$ & $+.008$ & $+.011$ & $+.011$ \\
$\Delta$ @105B (clean) & $+.038$ & $-.004$ & $-.007$ & $-.0025$ \\
\bottomrule
\end{tabular}
\caption{Token-weighted decomposition on the decomposition sample (Setup;
training-exposed, not the held-out pool): $\sum$ share $\times$
region-$\Delta$ reproduces the sample aggregate to within 2\%. Point
estimates; per-block losses were not retained. The deficit
concentrates on low-context query tokens and localizes there entirely by
105B; the same account predicts the fineweb gap (measured 0.040,
pre-registered window 0.02--0.05).}
\label{tab:regions}
\end{table}

\textbf{Token regions.} Every corpus document has query, reasoning-trace,
and answer regions delimited by atomic marker tokens.
Table~\ref{tab:regions} and Figure~\ref{fig:decomp} decompose the
iso-param gap on the decomposition sample. At 31B tokens the per-token
deficit on query regions ($+0.052$) is five times the sample aggregate
while carrying 8\% of its loss; the token-weighted sum of region gaps
reproduces the sample aggregate to within 2\%, so the decomposition is
exhaustive. By 105B the localization is complete on this sample: the SAN
leads on every answer region, including memorization exercises, and on
traces; the deficit survives only on query tokens ($+0.038$), the
positions with the least context to route from. Note the pool distinction
of the Setup: the decomposition sample's 105B aggregate ($-0.0025$, SAN
ahead) differs from the held-out validation gap ($+0.0055$, FFN ahead)
because the samples differ in composition and training exposure; both are
parity-scale, the within-sample paired contrasts that localize the gap
are unaffected, and the two aggregates are not interchangeable. Across
the size ladder the query deficit is the invariant, positive at every
size and budget while every other region changes sign
(Figure~\ref{fig:decomp}b). The query-localized account is itself a
revision: the registered prediction (gap largest on memorization answers)
was falsified by these measurements, and only the revised account's
fineweb consequence below was pre-registered.

\begin{figure*}[t]
\centering
\includegraphics[width=\textwidth]{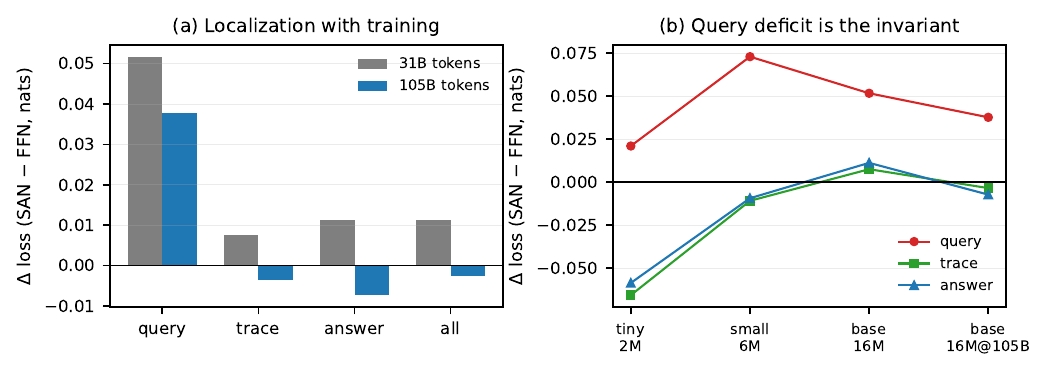}
\caption{(a) Region-level gap at 31B vs.\ 105B tokens: with training the
deficit localizes entirely to low-context query prediction. (b) Across
matched size pairs, the query deficit is the only invariantly FFN-favored
quantity.}
\label{fig:decomp}
\end{figure*}

\textbf{Task types.} On answer regions by exercise, the sign flips in the
storage-account's direction: at 31B the SAN is already better on rag,
editing, and math answers (retrievable from prompt or trace) and worse on
memorization and free-form generation; by 105B it leads on all of them.

\begin{table*}[t]\centering\footnotesize
\begin{tabular}{lccccccc}
\toprule
Model & lambada (ppl) & sciq & arc\_easy & piqa & hellaswag & winogrande & mmlu \\
\midrule
chance & $\sim$0 & 0.25 & 0.25 & 0.50 & 0.25 & 0.50 & 0.25 \\
\midrule
SAN 24M @31B & 0.091 (821) & 0.725 & 0.397 & 0.545 & 0.279 & 0.504 & 0.236 \\
FFN iso-param @31B & 0.136 (492) & 0.702 & 0.416 & 0.563 & 0.283 & 0.516 & 0.233 \\
SAN 24M @105B (3s) & 0.111 (682) & 0.742 & 0.405 & 0.559 & 0.284 & 0.517 & 0.232 \\
FFN iso-param @105B (3s) & 0.134 (518) & 0.661 & 0.415 & 0.558 & 0.283 & 0.511 & 0.245 \\
FFN iso-FLOP 43M @105B & 0.178 (451) & 0.634 & 0.419 & 0.562 & 0.293 & 0.534 & 0.230 \\
FFN iso-depth 87M @105B & 0.111 (964) & 0.630 & 0.426 & 0.552 & 0.288 & 0.525 & 0.233 \\
\midrule
SAN fineweb-trained @31B & 0.203 (189) & 0.705 & 0.447 & 0.579 & 0.294 & 0.507 & 0.231 \\
FFN fineweb-trained @31B & 0.181 (218) & 0.707 & 0.478 & 0.588 & 0.287 & 0.507 & 0.232 \\
\bottomrule
\end{tabular}
\caption{The storage/routing split at task level. Lambada (out-of-distribution
recall for SYNTH-trained models) favors the FFN at every budget, but is
non-monotone in FFN capacity: iso-depth (87M), the best in-distribution model,
is the worst at it (0.111, ppl 964). Sciq (support passage in context) favors
the SAN with a margin that grows with training in the pre-registered direction
(seed ranges non-overlapping at 105B) and falls monotonically with FFN
capacity. Trained on knowledge-dense fineweb-edu, the lambada preference
reverses: a task's storage/routing identity is relative to the training
distribution. Remaining tasks are at chance at these scales.}
\label{tab:downstream}
\end{table*}

\textbf{Benchmarks.} Table~\ref{tab:downstream} shows the same split at
task level. Lambada, predicting a specific content word and, for
SYNTH-trained models, an out-of-distribution recall task, favors the FFN
arm at every budget. Sciq, whose answer sits in a provided support
passage, favors the SAN, and the margin grows with training in a
direction we registered in advance: from 31B to 105B the SAN improves
(0.725 $\rightarrow$ 0.742) while the FFN model regresses (0.702
$\rightarrow$ 0.661), with non-overlapping seed ranges. The remaining
tasks sit at chance at these scales.

\textbf{A pre-registered out-of-distribution test.} The account so far
says the FFN's advantage is low-context prediction. Natural web text is
mostly low-context prediction. Before launching the run, we registered
the prediction that an iso-param pair trained on fineweb-edu
\citep{penedo2024fineweb} would show a gap of 0.02 to 0.05 nats, bounded above by the pure query-region deficit
and well above the SYNTH aggregate. The measured gap is 0.0398.
Instructively, the same pair reverses on lambada (SAN 0.203 vs.\ FFN
0.181): trained on distribution-matched text, the passage suffices to
infer the final word, and lambada becomes a routing task. A task's
storage-versus-routing identity is relative to the match between training
distribution and task, a point the discussion returns to.

\section{Mechanism: Weight-Spectrum Dynamics}

\begin{figure*}[t]
\centering
\includegraphics[width=\textwidth]{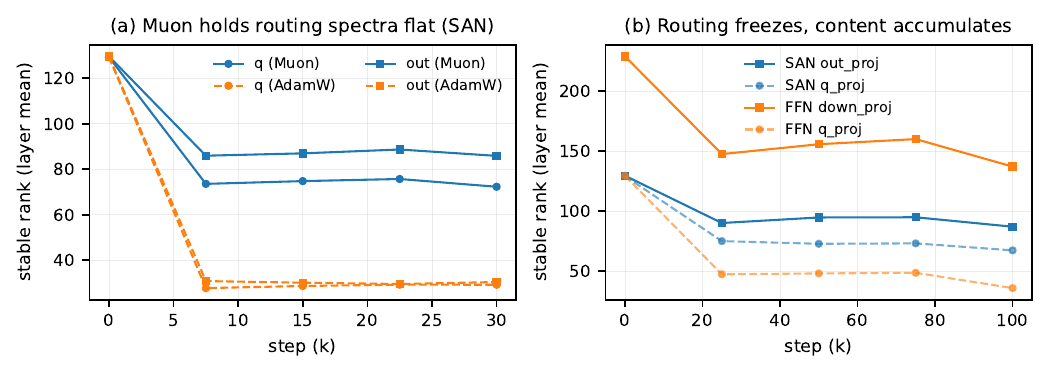}
\caption{Stable rank ($\|W\|_F^2/\|W\|_2^2$, layer mean) of weight
matrices over training. (a) Muon holds Q and output-projection spectra
2--3$\times$ flatter than AdamW in the SAN. (b) Routing matrices (Q)
crystallize by a quarter of training in every model; write-path matrices
(SAN $W_o$, FFN down-projection) accumulate rank through the stable phase,
contracting only in the learning-rate decay tail.}
\label{fig:spectra}
\end{figure*}

Figure~\ref{fig:spectra} tracks the stable rank of every weight matrix
over milestone checkpoints. Two regularities hold across all
architectures, sizes, and budgets we trained. First, routing matrices
(Q/K) spectrally crystallize within the first quarter of training and do
not move thereafter; the optimizer sets the level (Muon holds them
2--3$\times$ flatter than AdamW, alike in both architectures; the value
projection is optimizer-invariant), but the schedule is universal. The Muon side is the
a-priori expectation: for orthogonalized updates, Weyl's inequality bounds
every singular value's per-step drift by the learning rate, though the
bound controls drift only and does not by itself predict the level
separation we observe. Second, the matrices
that write content into the residual stream accumulate stable rank for as
long as the stable phase lasts: the FFN down-projection in FFN models, and
in attention-only models the output projection $W_o$, which inherits the
role. This is the training-dynamics face of the QK/OV
routing-versus-content decomposition \citep{elhage2021mathematical}:
selection structure is learned early and frozen; content capacity
accrues indefinitely, in whichever matrices can hold it. Representation
effective rank stays high everywhere (minimum layer rank 173 of 512 at 20
layers), so the classical collapse regime
\citep{dong2021attention,noci2022signal} is never approached with
residuals and normalization in place.

\section{Component Ablations}

\begin{table}[tb]\centering\footnotesize
\setlength{\tabcolsep}{3.5pt}
\begin{tabular}{@{}lll@{}}
\toprule
Variant & Val loss & $\Delta$ vs gated \\
\midrule
gated (baseline, 20L) & 2.1343 & --- \\
$+$ sandwich norm & \textbf{2.1251} & $-0.0092$ \\
standard residual & 2.1330 & $-0.0013$ \\
ReZero residual & 2.1375 & $+0.0032$ \\
RMSNorm $\gamma{=}1$ (control) & 2.1358 & $+0.0015$ \\
no residual & 2.8826 & $+0.75$ \\
no QK-norm & \emph{diverges} (8.28) & --- \\
\midrule
depth 8/32/48L, gated & 2.168/2.153/2.175 & U-shaped \\
\quad standard residual & 2.164/2.150/2.177 & gates neutral \\
FFN 20L gated/standard & 1.6977/\textbf{1.6930} & neutral \\
\bottomrule
\end{tabular}
\caption{QK-norm is the load-bearing component; gates are
performance-neutral at every depth and in both architectures (their value
was diagnostic); sandwich norm is the only improvement. The RMSNorm
control is optimizer-equivalent by construction and calibrates the
same-seed noise floor (0.0015).}
\label{tab:ablations}
\end{table}

Table~\ref{tab:ablations} isolates the components. Three results matter.
First, QK-normalization is load-bearing: removing it diverges outright at
the tuned learning rate, the only divergence in the study and a finding
none of our predictions anticipated; the claim is scoped to the tuned
rate. Second, residual gating is performance-neutral
everywhere, at 20--48 layers in the SAN arm and in the FFN mirror, matching
ReZero-family expectations \citep{bachlechner2020rezero}; the gates'
value was diagnostic, as their trajectories exposed the FFN arm's
self-pruning under learning-rate stress (Table~\ref{tab:sweep}). Third, the only variant to beat the baseline is sandwich
normalization \citep{shleifer2021normformer} at $-0.009$ nats. Depth at
iso-param is U-shaped with a 20-layer optimum, and 48-layer attention-only
stacks train without incident. The optimizer interaction is
budget-dependent: a full crossover at 5B tokens (each architecture
preferring a different optimizer, Table~\ref{tab:sweep}) washes out
entirely by 30B (Table~\ref{tab:opt}).

\begin{table}[tb]\centering\footnotesize
\setlength{\tabcolsep}{5pt}
\begin{tabular}{@{}lcc@{}}
\toprule
Val loss @30B & Muon & AdamW \\
\midrule
SAN 24M & 2.1126 & 2.1103 \\
FFN iso-param 24M & 2.0933 & 2.0950 \\
\bottomrule
\end{tabular}
\caption{The optimizer $\times$ architecture grid at 30B tokens, each
cell at its own tuned rate: within-architecture deltas are $\le 0.002$
nats, against a 5B-token crossover of 0.022 and 0.017
(Table~\ref{tab:sweep}). The interaction is a short-horizon phenomenon.}
\label{tab:opt}
\end{table}

\textbf{A recipe, in one paragraph.} For attention-only stacks at these
scales: QK-normalization is mandatory (its removal is the study's only
divergence); post-attention sandwich normalization is a free improvement;
scalar residual gates cost nothing and expose useful training
diagnostics, but standard residuals match them at every depth tested;
depth prefers 20 layers at this width, and 48 layers remain trainable.
More consequential than any of these, tune the learning rate per
architecture: the optimal Muon rate differs by 2$\times$ between the
arms, and a shared rate silently biases any comparison.

\section{Discussion and Limitations}

\textbf{What the FFN buys.} Under controls, the transformer's FFN is not
a distinct computational primitive whose removal breaks the model; it is
parameter capacity attached to a write path. Give the same capacity to
attention depth and, on reasoning-dense data, what remains is a small,
highly reproducible cost: 0.006 nats, 0.27\% of loss, agreeing to
$10^{-4}$ across clean seed pairs, well outside the paired-seed spread
and 3.7$\times$ the same-seed floor, reversing sign only under a
documented FFN-arm instability. We claim a tiny, real effect, not a null.
What is lost specifically is performance on tokens where context provides
nothing to route, and the loss relocates, in weight space, to whichever
matrices write content into the stream. Separately, every instability
event in the study occurred in the FFN arm and none in the SAN arm
(gate self-pruning under learning-rate stress, Table~\ref{tab:sweep};
one terminal-phase divergence, Table~\ref{tab:main}), a
practitioner-relevant robustness asymmetry
we report as observation.

\textbf{Distribution-relative task identity.} The fineweb reversal on
lambada and the iso-depth model's pattern (best in-distribution loss,
worst out-of-distribution recall) argue that storage-versus-routing is
not a fixed property of a task: it is a property of the match between the
task and the training distribution. This reframes small-model benchmark
comparisons generally, and it cautions against reading our SYNTH results
as distribution-free.

\textbf{When would one use a SAN?} At matched parameters the SAN pays
$\approx$2$\times$ FLOPs per token at 2048 context; iso-FLOP favors the
FFN. The trade favors attention-only models where parameters, not FLOPs,
bind (on-device memory limits), where architectural simplicity has
analysis value (a SAN is one repeated, interpretable primitive), and on
trace-rich distributions. We claim regime, not superiority.

\textbf{Limitations.} All results are at or below 87M total parameters
and 105B tokens, on one reasoning-dense corpus plus one knowledge-dense
control pair; MMLU-class benchmarks are at chance throughout and cannot
yet discriminate. Size-flatness is measured at 31.5B tokens and token
convergence at the base size only; their conjunction at larger scale is
extrapolation. The region decomposition reports point estimates
(per-block intervals were not retained), and its confirmation of the
revised account rests on a single out-of-distribution point. The storage account itself predicts a wider gap on
storage-heavy mixtures at larger scale, which is the natural next
experiment. Of the eight predictions fixed before their measurements, two
were falsified outright; the central localization finding emerged from
one of those failures.

\bibliography{references}

\section{Appendix A: Additional Formal Results}

Propositions 1--2, the scalar-bottleneck lemma, and the bounded-variance
theorem are stated and proved in the body of the paper. Two further results
inform component choices and the optimizer analysis. Notation as in the
main text: norm gain $(1+\gamma)$ with $\gamma$ init 0.

\begin{proposition}[Weight-decay fixed point of norm gains]
Under $L_2$ decay, the penalty on $\gamma$ in the $(1+\gamma)$
parameterization is minimized at gain 1 (identity); under the standard
parameterization (gain $h$, init 1) it is minimized at $h=0$, the zero
map.
\end{proposition}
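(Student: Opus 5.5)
The plan is to write both penalties in the same functional variable, the effective gain $h_{\mathrm{eff}}$ that multiplies the normalized activation, and compare where each penalty vanishes. In the zero-centered parameterization the ZCN definition gives $h_{\mathrm{eff}} = 1+\gamma$, and the $L_2$ decay term $\tfrac{\lambda}{2}\|\gamma\|^2$ becomes $\tfrac{\lambda}{2}\|h_{\mathrm{eff}}-\mathbf{1}\|^2$. In the standard parameterization $h_{\mathrm{eff}} = h$, and the penalty is $\tfrac{\lambda}{2}\|h\|^2$. Both penalties are strictly convex, separable, coordinatewise quadratics in $h_{\mathrm{eff}}$, so each has a unique minimizer, found coordinatewise. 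Setting the gradient $\lambda(h_{\mathrm{eff}}-\mathbf{1})$, respectively $\lambda h$, to zero gives $h_{\mathrm{eff}}=\mathbf{1}$ in the first case and $h=\mathbf{0}$ in the second.

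The second step is to name the maps these minimizers produce. With $h_{\mathrm{eff}}=\mathbf{1}$ the norm computes $z/\mathrm{RMS}(z)$, which is plain RMS normalization with unit gain; this is the ``identity'' gain the statement refers to. With $h=\mathbf{0}$ the norm outputs $0\cdot z/\mathrm{RMS}(z)=0$ for every input, which is the zero map. If it is worth stressing consequences, the zero map inside a branch (for example the pre-attention $u$) would silence that sublayer. Under the architecture of the Setup, the QK norm $\mathrm{ZCN}_h$ at zero gain would send every logit to $0$ and make $A$ uniform, and the final norm at zero gain would zero the logits. That is why the decay fixed point matters.

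I should also note the scope of the claim. It concerns the minimizer of the decay penalty alone, i.e.\ the point toward which decoupled weight decay contracts the gain when the loss gradient is absent or balanced. It is not a claim about the minimizer of the total objective. Under decoupled (AdamW-style) decay, the update $\gamma \leftarrow (1-\eta\lambda)\gamma$ is a contraction toward $\gamma=0$, that is, toward $h_{\mathrm{eff}}=\mathbf{1}$. The corresponding update $h\leftarrow(1-\eta\lambda)h$ is a contraction toward $h=0$, so the fixed-point language follows in one line from the update rule.

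The only subtle step is reconciling this with the Setup's claim that the two parameterizations are optimizer-equivalent ``under kernel-only weight decay.'' I would handle it by observing that the two parameterizations differ by the constant shift $h=1+\gamma$. Loss gradients and Adam-family updates are shift-invariant, so all dynamics coincide except for the decay term. Hence the present proposition is exactly the statement that the equivalence breaks once decay is applied to gains, which is consistent with, rather than contradicted by, the noise-floor control.
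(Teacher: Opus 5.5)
Your proposal is correct and follows the paper's argument. The paper's entire proof is that both decay penalties are strictly convex with the stated minimizers, which is exactly your coordinatewise computation after rewriting each penalty in the effective gain $h_{\mathrm{eff}}$. Your remarks on decoupled-decay contraction and on the kernel-only optimizer equivalence agree with the paper's surrounding discussion, but the proof does not need them.
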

\begin{proof}
Both objectives are strictly convex with the stated minimizers.
\end{proof}
In the study's fixed frame, weight decay applies to kernels only, so the
two parameterizations are additionally \emph{optimizer-equivalent}: they
differ by a constant shift, and Adam-family updates are shift-invariant.
Training both anyway yielded $\Delta = 0.0015$ nats, the same-seed noise
floor used throughout. The proposition's content, that zero-centering
makes ``do nothing'' the regularizer's fixed point, becomes operative
only in frames that decay norm gains.

\begin{lemma}[Muon bounds per-step spectral drift, and only that]
Let the update be $\Delta = -\eta P$ with $P$ the polar factor of the
momentum gradient (Newton--Schulz approximation error $\varepsilon$). By
Weyl's inequality for singular values, for every $i$,
$|\sigma_i(W+\Delta)-\sigma_i(W)| \le \|\Delta\|_2 = \eta(1+\varepsilon)$;
after $t$ steps every singular value lies within $t\eta$ of its
initialization.
\end{lemma}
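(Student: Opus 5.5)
The argument rests on Weyl's inequality for singular values: for any two matrices $X, E$ of the same shape and every index $i$, $|\sigma_i(X+E)-\sigma_i(X)| \le \|E\|_2$. The first step is to apply it with $X=W$ and $E=\Delta=-\eta P$, which gives $|\sigma_i(W+\Delta)-\sigma_i(W)|\le \eta\|P\|_2$ uniformly in $i$. If needed, Weyl can be justified in one line. By the min--max (Courant--Fischer) characterization, $\sigma_i(X)=\max_{\dim V=i}\min_{v\in V,\|v\|=1}\|Xv\|$, and the triangle inequality gives $\|(X+E)v\|\le\|Xv\|+\|E\|_2$ and the reverse. This yields both directions of the bound.

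The second step controls $\|P\|_2$. If $M=U\Sigma V^\top$ is a thin SVD of the momentum gradient, the exact polar factor is $UV^\top$. All of its nonzero singular values equal $1$, so $\|UV^\top\|_2=1$ whenever $M\neq 0$, and $\|UV^\top\|_2=0$ if $M=0$. I would read ``Newton--Schulz approximation error $\varepsilon$'' as the assumption that the computed factor $P$ satisfies $\|P-UV^\top\|_2\le\varepsilon$. Alternatively, it can mean that every singular value of $P$ lies in $[1-\varepsilon,1+\varepsilon]$; both readings give $\|P\|_2\le 1+\varepsilon$. This is the step I expect to require the most care. The statement writes the bound as an equality, $\|\Delta\|_2=\eta(1+\varepsilon)$, but only $\le$ is justified in general. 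I would therefore state and prove it as an upper bound, noting that it is tight when the error is extremal. The $\varepsilon$ must also be made precise in spectral norm, since Newton--Schulz iterates are usually analyzed through a scalar polynomial acting on each singular value. In that setting, $\varepsilon$ is just the sup-deviation of that polynomial from $1$ on the relevant interval.

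The multi-step claim follows by telescoping. Write $W_0$ for the initialization and $W_{s+1}=W_s+\Delta_s$. Then
\[
|\sigma_i(W_t)-\sigma_i(W_0)|\le\sum_{s=0}^{t-1}|\sigma_i(W_{s+1})-\sigma_i(W_s)|\le t\,\eta(1+\varepsilon).
\]
This is ``within $t\eta$'' up to the factor $(1+\varepsilon)$, which the statement evidently absorbs, or which vanishes for an exact polar factor. If the learning rate follows a schedule $\eta_s$, the same argument gives $\sum_s\eta_s(1+\varepsilon)$.

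Finally, I would check the ``and only that'' clause. The bound is purely a worst-case drift estimate, and it says nothing about the level at which singular values settle. To show this, it suffices to exhibit two update sequences with identical per-step norms $\eta$ but different spectral outcomes. One adds $\eta\,u_1v_1^\top$ repeatedly, so the top singular value grows linearly. The other alternates the sign of that update, so the spectrum stays fixed. Both satisfy the bound, which confirms that the lemma cannot predict the level separation reported in Figure~\ref{fig:spectra}.
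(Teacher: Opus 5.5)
Your proposal is correct and follows the paper's own one-line argument: Weyl's inequality applied per step with $\|P\|_2 \le 1+\varepsilon$, then telescoping over steps. Your refinements are sound and sharpen the paper's phrasing. $\|\Delta\|_2 = \eta(1+\varepsilon)$ holds only as an upper bound, and the $t$-step drift is $t\eta(1+\varepsilon)$, or $\sum_s \eta_s(1+\varepsilon)$ under the warmup-stable-decay schedule, which the paper's \emph{within $t\eta$} silently rounds off to the $\varepsilon=0$ case.
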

This bounds drift; it does not prove spectra stay balanced nor that
balance improves loss. The measured outcome is that Muon holds
routing-matrix spectra 2--3$\times$ flatter than AdamW in both
architectures alike: the effect is real but not architecture-differential.

\section{Appendix B: Registration Record}

All hypotheses and predictions were fixed before the corresponding
measurements; the fineweb-edu prediction was fixed numerically before
that run launched. Outcomes against the registered predictions:

\begin{table}[h]\centering\small
\begin{tabular}{p{0.12\columnwidth}p{0.78\columnwidth}}
\toprule
P1 & \textbf{Confirmed.} Iso-param gap within the seed band at 105B
(clean pairs $+0.0055/+0.0054$). \\
P2 & \textbf{Confirmed at 31B} (trace gap smallest); at 105B trace and
answer both reach parity, superseded by full localization. \\
P3 & \textbf{Falsified as stated.} The gap peaks on low-context query
tokens, not memorization answers; the restated account then made the
confirmed fineweb prediction. \\
P4 & \textbf{Gates component falsified} by its own criterion (gated
$\approx$ ungated everywhere); trainability confirmed; QK-norm emerged,
unpredicted, as the load-bearing component. \\
P5 & \textbf{Not exercised}: the fixed frame decays kernels only; the
measurement instead calibrated the noise floor (0.0015). \\
P6 & \textbf{Budget-dependent}: full optimizer$\times$architecture
crossover at 5B; gone by 30B. \\
P7 & \textbf{Half-confirmed}: spectra flatter under Muon everywhere, but
not differentially in the SAN; representation rank high everywhere
(minimum 173/512 at 20 layers). \\
P8 & \textbf{Falsified}: no architecture$\times$repetition interaction;
both arms nearly repetition-insensitive to 18 epochs. \\
\bottomrule
\end{tabular}
\caption{Registered predictions vs.\ outcomes. The instruments were
designed so that predictions could fail; two did, and one failure
produced the paper's central finding.}
\end{table}

\section{Appendix C: Learning-Rate Probes for the Size Ladder}

The size ladder reuses the base-tuned rates at every size. Before
interpreting its curve shape, 3-point rate probes
(0.5$\times$/1$\times$/2$\times$ of the lock, 2.5k-step horizon, since a
30k-horizon run's early validation is not schedule-comparable) at the
smallest and largest sizes confirmed the transferred lock is in the flat
region at tiny (2.851/2.856/2.857) and an interior minimum at xl
(2.074/2.066/2.078); no per-size sweeps were needed.

\section{Appendix D: Per-Seed Downstream Results}

\begin{table*}[t]\centering\footnotesize
\begin{tabular}{lccccccc}
\toprule
Model & lambada (ppl) & sciq & arc\_easy & piqa & hellaswag & winogrande & mmlu \\
\midrule
SAN @105B, seed 42 & 0.109 (733) & 0.718 & 0.384 & 0.562 & 0.284 & 0.518 & 0.231 \\
SAN @105B, seed 43 & 0.115 (633) & 0.752 & 0.414 & 0.560 & 0.285 & 0.530 & 0.232 \\
SAN @105B, seed 44 & 0.110 (682) & 0.757 & 0.419 & 0.555 & 0.282 & 0.504 & 0.232 \\
\midrule
FFN iso-param @105B, seed 42 & 0.143 (577) & 0.650 & 0.401 & 0.559 & 0.287 & 0.494 & 0.234 \\
FFN iso-param @105B, seed 43$^\dagger$ & 0.128 (482) & 0.660 & 0.426 & 0.559 & 0.279 & 0.518 & 0.247 \\
FFN iso-param @105B, seed 44 & 0.132 (493) & 0.673 & 0.418 & 0.555 & 0.282 & 0.521 & 0.254 \\
\bottomrule
\end{tabular}
\caption{Per-seed 0-shot results behind the 105B means of the main text.
Sciq seed ranges do not overlap between arms (SAN 0.718--0.757 vs.\ FFN
0.650--0.673). $^\dagger$The seed with the terminal-phase instability.}
\label{tab:downstream_seeds}
\end{table*}

Table~\ref{tab:downstream_seeds} gives the per-seed values behind the
105B means reported in the body of the paper. The capacity-axis rows (iso-FLOP,
iso-depth) and the fineweb-trained pair appear in the body of the paper's
downstream table.

\section{Appendix E: Repetition: Exact Values}

Validation loss at 30k steps (31.5B tokens) under unique-document
constraints:

\begin{table}[h]\centering\small
\begin{tabular}{lcccc}
\toprule
Unique docs & 2M ($\sim$18 ep.) & 8M & 32M & 78M (full) \\
\midrule
SAN & 2.1193 & 2.1117 & 2.1095 & 2.1126 \\
FFN iso-param & 2.0975 & 2.1018 & 2.0926 & 2.0933 \\
\bottomrule
\end{tabular}
\caption{Both arms are nearly repetition-insensitive; within-arm
non-monotonicity ($\sim$0.005--0.009) is comparable to data-subset
variance, and there is no architecture interaction.}
\end{table}

\section{Appendix F: Gate Trajectories and FFN-Arm Observations}

\begin{figure}[h]
\centering
\includegraphics[width=0.9\columnwidth]{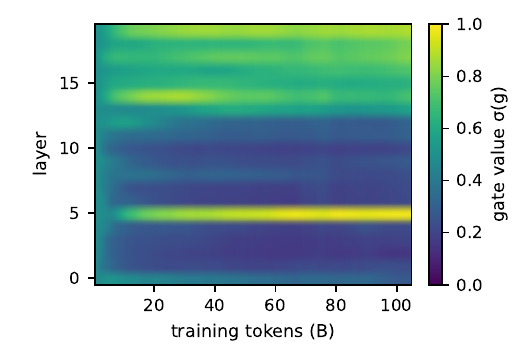}
\caption{Gate values over training, SAN 105B run (layers $\times$ time).
Early layers self-attenuate and late layers sharpen under Muon.}
\end{figure}

Although performance-neutral, the gates were diagnostically productive.
Three independent observations of FFN-arm fragility: (i) at 16$\times$ the
tuned rate, all FFN gates collapsed (mean 0.07) while attention gates
stayed structured; (ii) at the tuned rate under Muon, FFN gates ended low
(mean 0.16--0.17) in three separate runs, echoing the same self-pruning;
(iii) one of three FFN seeds at 105B suffered a terminal-phase
instability, its gradient norm tripling over the final 800 steps at the
learning-rate floor (Figure~\ref{fig:tail}); no SAN run showed any of
these behaviors. We report these as observations, not claims.

\begin{figure}[h]
\centering
\includegraphics[width=0.9\columnwidth]{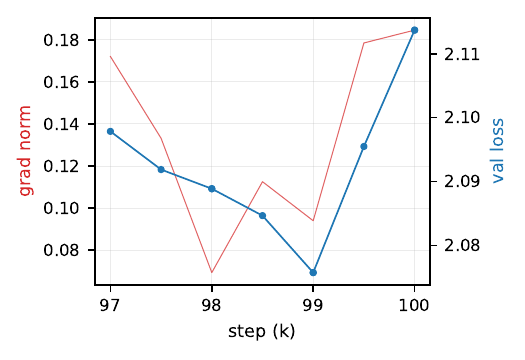}
\caption{The FFN iso-param seed-43 terminal instability. The run tracked
its sibling seed to within 0.002 nats through step 99{,}000. Its final
value is included, flagged, in all statistics.}
\label{fig:tail}
\end{figure}

\section{Appendix G: Depth Ladder}

\begin{figure}[h]
\centering
\includegraphics[width=0.9\columnwidth]{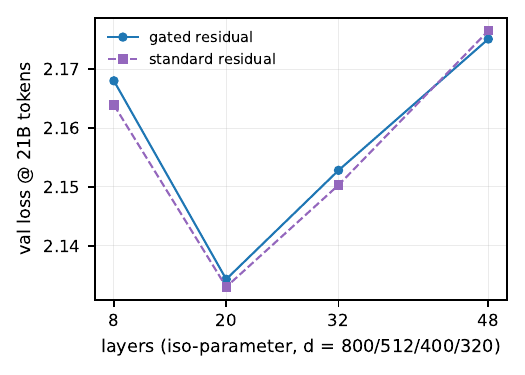}
\caption{Iso-parameter depth ladder for the SAN arm, gated vs.\ standard
residuals: U-shaped in depth with a 20-layer optimum; gates neutral at
every depth; 48 layers train without incident.}
\end{figure}

\section{Appendix H: Reproducibility Details}

\textbf{Infrastructure.} Training ran on 8$\times$H100 80GB nodes
(two phases on 7$\times$H100, frame-matched below) with single-node data
parallelism, bfloat16 activations, JAX 0.10 with CUDA 12.9 on Ubuntu
22.04; evaluation on a single GPU.

\textbf{Frame matching.} All headline results use a fixed global batch of
512 rows (64/device $\times$ 8) of 2048 tokens. Phases run on 7-GPU nodes
used 73 rows/device (511 total, $-0.2\%$) with learning rates rescaled so
effective values equal the 8-device locks (Adam scales linearly in device
count, Muon as its square root). A base-size pair rerun across frames
agreed to 0.0033 nats.

\textbf{Determinism and recovery.} Checkpoints store parameters, not
optimizer state. One run was resumed from its 40\%-checkpoint after an
artifact loss; despite cold optimizer state at the stable-phase rate, it
reproduced the original final validation loss exactly (1.5982),
suggesting the warm-restart transient is fully absorbed by the
learning-rate decay phase.

\textbf{Statistics.} Headline comparisons use three seeds per arm with
paired per-seed differences; the same-seed noise floor (0.0015 nats) comes
from the optimizer-equivalent normalization pair. Per-block losses were
not retained, so block-level bootstrap intervals are not reported;
seed-level agreement (clean pairs matching to $10^{-4}$) is the operative
evidence.

\textbf{Artifacts.} Code, tokenizer, corpus tokenization manifests, all
checkpoints with 25\% milestones, training logs, and every evaluation
report accompany this work.

\end{document}